\documentclass{article}
\usepackage[preprint]{neurips_2026}

\usepackage[utf8]{inputenc}
\usepackage[T1]{fontenc}
\usepackage[hidelinks]{hyperref}
\usepackage{url}
\usepackage{microtype}
\usepackage{nicefrac}
\usepackage{graphicx}
\usepackage{amsmath}
\usepackage{amsfonts}
\usepackage{amsthm}
\usepackage{booktabs}
\usepackage{algorithm}
\usepackage{algpseudocode}

\newtheorem{theorem}{Theorem}

\newtheorem{remark}{Remark}

\newenvironment{proofsketch}{\begin{proof}[Proof sketch]}{\end{proof}}
\usepackage{array}
\usepackage{xcolor}
\usepackage{makecell}
\usepackage{xspace}
\usepackage{enumitem}

\usepackage{mathtools} 
\usepackage{amssymb}
\usepackage{appendix}
\def\CX{\mathcal{X}} 
\def\CA{\mathcal{A}}

\def\Zspace{\mathcal{Z}}

\def\tb{{b}}

\newcommand{\emathbb}[1]{\ensuremath{\underset{{#1}}{\mathbb E}}}

\newcommand{\probz}[1]{\ensuremath{\mathbb{P}_{Z}({#1})}}

\newtheorem{assumption}{Assumption}[section]

\newcommand{\prob}[1]{\ensuremath{\mathbb{P}({#1})}}

\def\voro{\mathcal{C}} 
\def\th{\tilde{h}} 
\def\tb{\tilde{b}}

\title{Finite-Time Analysis of MCTS in Continuous POMDP Planning}

\author{
\begin{tabular}[t]{@{}c@{\hspace{1.6em}}c@{}}
\begin{tabular}[t]{@{}c@{}}
\textbf{Da Kong} \\
\normalfont\small Technion Autonomous Systems Program (TASP) \\
\normalfont\small Technion -- Israel Institute of Technology
\end{tabular}
&
\begin{tabular}[t]{@{}c@{}}
\textbf{Vadim Indelman} \\
\normalfont\small Stephen B. Klein Faculty of Aerospace Engineering \\
\normalfont\small Faculty of Data and Decision Sciences \\
\normalfont\small Technion -- Israel Institute of Technology 
\end{tabular}
\end{tabular}
}

\begin{document}
\maketitle
\begin{abstract}
This paper presents a finite-time analysis for Monte Carlo Tree Search (MCTS) in Partially Observable Markov Decision Processes (POMDPs), with probabilistic concentration bounds in both discrete and continuous observation spaces. While MCTS-style solvers such as POMCP achieve empirical success in many applications, rigorous finite-time guarantees remain an open problem due to the non-stationarity and the interdependencies induced by heuristic action selection (e.g., UCB). In the discrete setting, we address these challenges by extending the polynomial exploration bonus to UCB in POMDP setting, yielding polynomial concentration bounds for the empirical value estimation at the root node.
For continuous observation spaces, we introduce an abstract partitioning framework and propose a finite-time bound on partitioning loss. Under mild conditions, we prove high-probability bound on value estimates in POMDPs with continuous observation space. Specifically, we propose Voro-POMCPOW, a variant of POMCPOW with finite-time guarantees that adaptively partitions the continuous observation space using Voronoi cells. This approach maintains a finite branching factor while preserving the original observation generator. Empirical validation demonstrates that the proposed Voro-POMCPOW shows competitive performance while providing theoretical guarantees. Although our analysis focuses on continuous POMDPs, the techniques developed herein are also applicable to continuous MDPs, closing another gap on the MDP side.
\end{abstract}

\section{Introduction}
Partially Observable Markov Decision Processes (POMDPs) provide a prime mathematical framework for sequential decision-making under uncertainty, which models all kinds of uncertainty into the observation model and transition model. While the exact solution of POMDPs is computationally intractable due to the curse of dimensionality and the curse of history, online anytime algorithms have emerged as efficient and powerful approximate methods that scale to large problems. Specifically, Monte Carlo Tree Search (MCTS) demonstrates strong empirical success in POMDP planning, notably POMCP~\citep{Silver10nips}, in domains ranging from game playing to robotics~\citep{Kurniawati22ar,Lauri22tro}.  

Despite their widespread practical success, the theoretical analysis of MCTS in POMDPs—particularly the finite-time performance guarantees—remains under-explored. 
To our knowledge, the most up-to-date existing theoretical analysis provides asymptotic convergence guarantees, like in POMCP~\citep{Silver10nips}, which requires the simulation count to grow without bound but comes with no finite-time guarantees. For many safety-critical applications, such as autonomous driving or robot manipulation, finite-time guarantees are essential.
A finite-time analysis of MCTS is challenging due to the non-stationary Multi-Armed Bandit (MAB), and interdependencies among samples. The recent finite-time analysis of MCTS~\citep{Shah22or} is limited to MDPs with a finite branching factor. 

In the context of POMDPs, MCTS-style solvers with finite-time guarantees are limited. As far as we know, the only existing work is by~\citet{Barenboim26aij}, which provides deterministic anytime bounds for discrete POMDPs. However, probabilistic finite-time guarantees for MCTS in POMDPs are still missing. Moreover, the deterministic bound from~\citep{Barenboim26aij} does not extend to continuous observation spaces, which are prevalent in real-world applications. A finite-time analysis for continuous spaces in both MDPs and POMDPs is an open problem.

This work bridges the aforementioned gaps by providing the first finite-time analysis of MCTS in POMDPs with probabilistic guarantees, in both discrete and continuous observation spaces.
We first extend the polynomial exploration bonus of UCB~\citep{Shah22or} to the discrete POMDP setting, deriving polynomial concentration bounds for the empirical value estimate at the root node.
Building on this, we propose the Corrected-POMCP solver, which incorporates the corrected UCB bonus and maintains finite-time guarantees.

Next, we extend the analysis to continuous observation spaces. We introduce an adaptive partitioning framework to discretize the continuous observation space, thereby inducing a finite branching factor. Under mild assumptions and requirements for the partitioning scheme, we derive a probabilistic bound for MCTS value estimates in continuous POMDPs by applying a union bound over the partition-induced error and the MCTS estimation error. As a specific instance, we employ  Voronoi partitioning and propose Voro-POMCPOW solver, an extension of POMCPOW with finite-time guarantees. Experimental results demonstrate  competitive performance of both of the proposed solvers.

Although our primary focus is on continuous POMDPs, the techniques developed are directly applicable to continuous MDPs, thereby also contributing to the finite-time analysis of MCTS in fully observable continuous domains.

Our contributions are summarized as follows:
\begin{itemize}
    \item We present a finite-time analysis for MCTS in discrete POMDP planning with probabilistic guarantees, which can be applied to a corrected variant of POMCP.

    \item We present the first finite-time analysis to  continuous observation and state spaces for POMDPs by introducing an abstract partitioning scheme. This methodology is also applicable to continuous MDPs. 

    \item We propose solvers with formal finite-time guarantees in discrete and continuous POMDPs, namely Corrected-POMCP and Voro-POMCPOW, and validate their performance.    
\end{itemize}

\section{Background}

POMDPs constitute a foundational framework for sequential decision-making under uncertainty. For a comprehensive background on POMDPs, we refer to surveys~\citep{Kurniawati22ar,Lauri22tro}. 

This paper focuses on the finite-time analysis of MCTS in the context of POMDPs. For a general overview of MCTS, we refer to the survey~\citep{Swiechowski23air}. In the fully observable MDP setting,~\citet{Shah22or} provide a finite-time analysis with probabilistic bounds for MCTS in finite-branching MDPs. \citet{Dam24jair} derive finite-time bounds for another variant of MCTS with a power-mean backup, later extended to continuous MDPs~\citep{Dam25icml2}.
In contrast, finite-time guarantees for MCTS in POMDPs remain largely open. To our knowledge, the only existing result is the deterministic anytime bound for discrete POMDPs by \citet{Barenboim26aij}. This work aims to bridge this gap by developing a finite-time analysis of MCTS for continuous POMDPs.

Beyond finite-time analysis, the convergence rate of MCTS is of particular interest for anytime solvers, as it directly helps set the computational budget. In MDPs, \citet{Chang25tac} provides a convergence rate analysis of MCTS with different heuristic functions under probabilistic bounds. This paper also aims to contribute to the convergence rate analysis of MCTS in POMDPs, with clear benefits for the design of POMDP simplification strategies (e.g.,~\citep{Kong24isrr}) in anytime planning.

A further key challenge addressed in this work is to handle  continuous spaces while preserving performance guarantees. Progressive widening, employed in solvers such as \textsc{POMCPOW}~\citep{Sunberg18icaps}, is a common technique for exploring in continuous spaces. Alternative geometric approaches include Voronoi partitioning: \citet{Lim21cdc} extend Optimistic Optimization~\citep{Munos11nips} to Voronoi Optimistic Optimization (VOO) for continuous action selection, and \citet{Hoerger24ijrr} construct a Voronoi-partitioned belief tree for continuous POMDPs. However, these methods do not come with formal guarantees. MCTS POMDP approaches that support continuous observation spaces while maintaining performance guarantees remain an open problem.

\section{Preliminary}
\subsection{Partially Observable Markov Decision Processes}
\label{subsec:pomdp-basics}
The basic model of Partially Observable Markov Decision Processes (POMDP) are defined as a tuple:
$\langle\CX, \CA, \Zspace, \mathbb{P}_{T}, \mathbb{P}_{Z}, r, \gamma, b_0\rangle$, where $b_0$ is the initial belief, $\CX$ is the state space, $\CA$ is the action space, $\Zspace$ is the observation space.
The transition model (or motion model) is defined as $\mathbb{P}_{T}(x_{k+1}|x_k, a_k)$, which describes the probabilistic transition of the state from $x_k\in\CX$ to $x_{k+1}\in\CX$ under a certain action $a_k \in \CA$. The observation model is defined as $\probz{z_k|x_k}$, which describes the probability of observation $z_k\in\Zspace$ given a certain state $x_k\in\CX$.
The reward function is considered to be state-dependent, $r: \mathcal{X},\mathcal{A} \mapsto \mathbb{R}$, and satisfies the bounded reward assumption: $r \in [-R_{\max}, R_{\max}]$. 

Given that the true state is uncertain, a belief is maintained to represent the distribution of the current state with regard to history.
The belief at time step $k$ is defined as $b_k \triangleq \prob{x_{k}| h_{k}} $, where the history $h_{k}$ is  defined as $h_{k} \triangleq \{z_{1:k},a_{0:k-1}\}$. 
We define the corresponding history space  as $\mathcal{H}_k \triangleq \{h_k\}$, which includes all possible history $h_k$.
A propagated history without the latest observation $h_k^{-}$ is defined as: $h_k^{-} = \{z_{1:k-1},a_{0:k-1}\}$, and the corresponding propagated belief is $b^-_k \triangleq \prob{x_k|h^-_k}$.
The Bayesian belief update is given by:
$    b_k = P(x_k|h_k) = \eta_k \; P(z_k \mid x_k)  P(x_k \mid h_k^-) = \eta_k \; P(z_k \mid x_k) b_k^-,
$ where $\eta_k$ is the Bayesian factor. 
We define the Bayesian belief update as the operator  $b_{k+1}=\psi(b_k, a_k, z_k) $.


The value function for a deterministic policy $\pi$ over the planning horizon $L$ is defined as: 
\(   V^\pi(b_k)= r(b_k, \pi_k(b_k)) + \sum_{i=k+1}^{k+L}\gamma^{i-k}\emathbb{z_{k+1:i}|b_k,\pi}\big[ r(b_i,\pi_i(b_i))\big].
\)
Here, for state-dependent rewards,  $r(b
, a) = \emathbb{x\sim b} [r(x, a)]$, and $\pi=(\pi_0, \ldots, \pi_L)$ with $\pi_i: \mathcal{H}_i \mapsto \mathcal{A}$, where $\mathcal{H}_i$ is the history space at time step $i$. In this paper, we use the history and belief space interchangeably. 
The corresponding $Q$-function is $Q^\pi(b_k, a_k) = r(b_k, a_k) + \emathbb{z_{k+1}|b_k,a_k} \big[ V^{\pi}(b_{k+1}) \big]$.

The goal of a POMDP is to find the optimal policy sequence $\pi^*=(\pi^*_0,\ldots\pi^*_{L})$ that maximizes the value function: 
\(
    \pi^* = \operatorname*{argmax}_{\pi} V^{\pi}(b_k).
\)
The optimal value function satisfies the Bellman optimality equation:
\(    
V^{*}(b_k) = \max_{a_k} \Big[ r(b_k, a_k) + \emathbb{z_{k+1}|b_k,a_k} \big[ V^{*}(b_{k+1}) \big] \Big].
\)
While the beliefs in the formulation above are theoretical, in practice these are often approximately represented by particle beliefs. Unless otherwise mentioned, we also consider this setting throughout the paper. Extending our results to the level of theoretical beliefs is possible, following a similar  methodology as in \citep{Lim23jair}, but outside the scope of this work.


\subsection{MCTS in POMDP Planning}

Monte Carlo Tree Search (MCTS) is a sampling-based, anytime planning algorithm that incrementally constructs a search tree under a given computational budget. When time budget permits, the algorithm performs successive iterations, each consisting of four stages: (i) \emph{selection}, which traverses the tree from the root by recursively choosing child nodes according to a tree policy that balances exploration and exploitation, typically the Upper Confidence Bound (UCB) rule; (ii) \emph{expansion}, which adds one or more unvisited nodes upon reaching a leaf; (iii) \emph{simulation}, which generates a trajectory from the newly expanded node to a finite planning horizon using the default or rollout policy; and (iv) \emph{backup}, which propagates the simulated return backward along the visited trajectory to update value estimates and visitation counts.

In an MDP setting, MCTS operates directly on state--action nodes, with the root corresponding to the initial state. In a POMDP setting, since the true state is unknown, planning must be performed over \textit{beliefs} derived from action--observation histories. A direct adaptation of MCTS to POMDPs is to represent the tree by belief nodes, each associated with a history, and use the given generative model of the environment to sample  trajectories, thereby constructing a belief tree.

A notable MCTS-based POMDP solver is POMCP~\citep{Silver10nips}, which employs a UCT-style tree search. Each simulation begins by sampling a particle from the initial belief, then selects actions according to the UCT criterion and samples subsequent states and observations from the generative model. Rather than maintaining an explicit belief representation, POMCP approximates the belief at each history node using a particle set derived from the simulated trajectories. The tree is indexed by action--observation histories, with each node storing visitation counts and estimated values. During backup, these statistics are updated using the simulated return. 

MCTS incrementally constructs an estimator of a stochastic policy sequence $\pi$, where for each time step $i$, $\pi_i: \mathcal{H}_i\times\mathcal{A}_i\mapsto [0,1]$ for a discrete setting. The estimated stochastic policy in MCTS is given by  $\hat{\pi}(a|b)=\frac{N(b,a)}{N(b)}$, where $N(b)$ is the visitation count of node $b$, and $N(b,a)$ is the visitation count of node $(b,a)$.

Applying MCTS-style solvers in continuous POMDPs shows a significant challenge, primarily due to the infinite branching factor in action and observation spaces. Several notable solvers have been developed to work in continuous spaces, including \textsc{POMCPOW}~\citep{Sunberg18icaps}, \textsc{VOMCPOW}~\citep{Lim21cdc}, and \textsc{ADVT}~\citep{Hoerger24ijrr}. These methods typically adopt a progressive widening strategy to selectively expand nodes in continuous spaces, thereby rendering the search tractable.
However, existing methods lack dedicated mechanisms that can provide performance guarantees in continuous observation spaces.

In MCTS, the value estimation is derived from the average of the sampled returns. 
Let $G^i(b_k)$ denote the return sample obtained from the $i$th iteration at the belief node $b_k$. 
The empirical value estimate after $N$ simulations at $b_k$ is given by:
$
    \hat{V}_N(b_k) = \frac{1}{N} \sum_{i=1}^N G^i(b_k).
$


\section{Finite-time Analysis in Discrete POMDP}
\label{section:discrete}
This section serves as a warm-up examination of the finite-time analysis of MCTS-style solvers in discrete POMDPs, with a particular focus on POMCP~\citep{Silver10nips}. 

We first identify two principal challenges in the theoretical analysis: the \textbf{non-stationary} nature and the \textbf{interdependencies} inherent in the MCTS tree structure. The non-stationarity arises from the anytime simulation process: as iterations progress, the stochastic policy used for the value estimation shifts continuously. The interdependencies stem from heuristic-based action selection mechanisms, such as UCB1, which depend on historical selections and accumulated rewards. This dependency violates the i.i.d.\ assumption required by Hoeffding's inequality, thereby complicating the analysis.

For UCB-based methods specifically, an additional obstacle to establishing finite-time bounds---particularly probabilistic bounds---is the logarithmic term in the UCB function employed during action selection. As noted in~\citep{Audibert09colt}, UCB cannot achieve exponentially decaying bounds; only polynomial bounds are attainable. By extending the corrected UCB formulation proposed in~\citep{Shah22or} to the POMDP setting, this section presents a finite-time analysis of MCTS for discrete POMDP planning, establishing a probabilistic concentration bound in discrete settings.

\subsection{Corrected-POMCP}
We propose to replace the standard logarithmic exploration bonus of UCB1 in POMCP by a \emph{polynomial} bonus form~\citep{Shah22or}. Concretely, for any tree node with history $h$ and action $a$, the action selection process is conducted by:
\begin{align}
\label{eq:corrected-index}
a_t \in \arg\max_{a\in\mathcal A(h)}
\left\{
\hat Q(h,a) + B_{t,s}(h,a)
\right\},
\end{align}
where $\hat Q(h,a)$ is the empirical return estimate and the polynomial exploration bonus is
\begin{align}
\label{eq:corrected-bonus}
B_{t,s}(h,a)
\triangleq
\beta^{1/\xi}\; N(h)^{\alpha/\xi}\; N(h,a)^{-(1-\eta)}.
\end{align}
Here $N(\cdot)$ is the visitation count, $\beta>1$, $\xi>0$ and $\eta\in[1/2,1)$ are concentration parameters, and
$\alpha$ is a tuning exponent.

By plugging the corrected UCB bonus into POMCP, we obtain the Corrected-POMCP solver. The solver largely follows POMCP's process except for the action selection process. 
The detailed algorithm is presented in the Appendix as Algorithm~\ref{alg:corrected-pomcp}.

\newcommand{\EE}{\ensuremath{\mathbb{E}}}
\newcommand{\PP}{\ensuremath{\mathbb{P}}}
\subsection{Theoretical Analysis}
This part of the theoretical analysis for discrete POMDPs largely follows the structure established for finite-branching-factor MDPs by \citet{Shah22or} (with their extension to stochastic models in the Appendix), extending it to a discrete POMDP setting.


We begin by stating the assumptions underlying our analysis of discrete POMDPs.

\begin{assumption}
The reward function $r(b, a)$ is bounded such that $|r(b,a)| \leq R_{\max}$.
\label{assumption:bounded-reward}
\end{assumption}
\begin{assumption}[Observation support and branching factor]
\label{assumption:observation-factor}
There exists $\phi_Z > 0$ such that for every reachable tree node (history or belief) $x$ and action $a$:
\begin{itemize}[leftmargin=*]
    \item[i.] The observation distribution $\PP_Z(\cdot \mid x, a)$ has finite support;
    \item[ii.] For any observation with positive probability,
    \begin{align}
        \PP_Z(z \mid x, a) \geq \phi_Z, \ \forall z \text{ such that } \PP_Z(z \mid x, a) > 0.
    \end{align}
\end{itemize}
Consequently, each state-action pair $(x, a)$ admits at most
$M_Z \triangleq \bigl\lfloor \phi_Z^{-1} \bigr\rfloor$
observation children in the belief tree.
\end{assumption}

Building upon these assumptions, we now establish the notation and framework for the finite-time analysis.

For each action $a \in \CA$, let $\{G_{a,t}\}_{t \geq 1}$ denote the sequence of return samples generated by MCTS when the simulation at $b_k$ initiates with action $a$ at the first decision step and subsequently follows the simulation/rollout policy.

For $n\ge1$, we define the following quantities:
$ \bar G_{a,n} \triangleq \frac{1}{n}\sum_{t=1}^n G_{a,t}$, $
  \mu_{a,n} \triangleq \EE[\bar G_{a,n}]$, and $ 
  Q_a \triangleq\lim_{n\to\infty} \mu_{a,n}.$ 
Furthermore, we define: 
$  V^*(b) \triangleq \max_{a\in\CA} Q(b,a), 
  a^* = \arg\max_{a\in\CA} Q(b,a)
$, and 
$  \Delta_{\min} \triangleq \min_{a\neq a^*} (V^* - Q_a),
  \delta_{*,n} \triangleq \mu_{a^*,n} - V^*.
$

At the $t$-th simulation from $b_k$, let $A^t \in \CA$ denote the action selection at $b_k$, and let
$N_a(t) \triangleq \sum_{\ell=1}^{t} \mathbb{I}\{A^\ell = a\}$
represent the cumulative number of times action $a$ has been selected up to simulation $t$. We assume that the action selection process adheres to the corrected-UCB rule defined in~\eqref{eq:corrected-index}.

\begin{theorem}[Polynomial Concentration of MCTS in Discrete POMDPs]
\label{theorem:pomdp-mcts-ucb}
Let Assumptions~\ref{assumption:bounded-reward} and~\ref{assumption:observation-factor} hold. Consider a POMDP belief tree of depth $L$, where the root $b_0$ is at depth $h=0$ and leaf nodes are at depth $h=L$.
Let $\eta^{(h)} \in [1/2, 1)$ be a fixed constant. Define the sequences of parameters $\{\xi^{(h)}\}_{h=0}^{L}$ and $\{\beta^{(h)}\}_{h=1}^{L}$ recursively updated from the leaves to the root (as shown in Appendix~\ref{appendix:proof-pomdp-mcts-ucb}.) Suppose the MCTS solver selects actions at any node $h$ (at depth $h < L$) using the Corrected-UCB index with level-dependent parameters $(\beta^{(h+1)}, \xi^{(h+1)},\eta^{(h+1)})$ from the child level.

Then, there exist constants $\beta^{(0)} > 1$ (depending on $V_{\max}, |\mathcal{A}|, L, \Delta_{\min}, M_Z, \phi_Z$) such that for the root value estimate $\hat{V}_n(b_0)$ after $n$ simulations, the following concentration bounds hold for all $n \ge 1$ and $z \ge 1$:
\begin{align}
    \mathbb{P}\left( n \left( \hat{V}_n(b_0) - V^*(b_0) \right) \ge n^{\eta^{(0)}} z \right) &\le \beta^{(0)} z^{-\xi^{(0)}}, \\
    \mathbb{P}\left( n \left( \hat{V}_n(b_0) - V^*(b_0) \right) \le -n^{\eta^{(0)}} z \right) &\le \beta^{(0)} z^{-\xi^{(0)}}.
\end{align}

\end{theorem}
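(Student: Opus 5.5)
We argue by backward induction on depth, following the structure of \citet{Shah22or}: the induction hypothesis at depth $h+1$ is that every node at that depth produces a return sequence obeying the two-sided polynomial concentration with parameters $(\beta^{(h+1)},\xi^{(h+1)},\eta^{(h+1)})$. At a leaf (depth $L$) the returns are bounded i.i.d.\ rollout returns, so Hoeffding's inequality gives exponential concentration. Exponential concentration implies polynomial concentration for any $\eta^{(L)}\ge 1/2$ and any $\xi^{(L)}$, with a suitable $\beta^{(L)}$; this gives the base case.

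The new ingredient relative to the MDP setting is the observation layer. A history--action node $(h,a)$ at depth $h$ does not feed a single child belief node. Instead it feeds at most $M_Z$ observation children, each sampled according to $\mathbb P_Z(\cdot\mid x,a)$ with probability at least $\phi_Z$ (Assumption~\ref{assumption:observation-factor}). The plan is to prove an aggregation lemma. Suppose each child sequence satisfies the polynomial concentration with $(\beta^{(h+1)},\xi^{(h+1)},\eta^{(h+1)})$. Then the mixed sequence $G_{a,t}=r+\gamma G_{z_t,\cdot}$ satisfies it with constants $(\beta',\xi^{(h+1)},\eta^{(h+1)})$, where $\beta'$ depends on $M_Z$, $\phi_Z$ and $R_{\max}$. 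The deviation $n(\bar G_{a,n}-Q_a)$ splits into three parts:
\begin{itemize}[leftmargin=*]
\item the sum over children $z$ of $N_z(n)$ times that child's deviation, handled by the induction hypothesis at time $N_z(n)\le n$ (using $N_z^{\eta}\le n^\eta$) and a union bound over $M_Z$ children;
\item a multinomial sampling term $\sum_z (N_z(n)-n\,\mathbb P_Z(z))\,V(z)$, which is a bounded martingale and so concentrates exponentially at scale $\sqrt n\le n^{\eta}$ by Azuma--Hoeffding;
\item the drift $\mu_{a,n}-Q_a$, controlled by the same bounds.
\end{itemize}
The observation draws are independent of the tree's internal statistics given the state, so the martingale structure is preserved despite the interdependencies.

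Given this lemma, the action-selection step at depth $h$ is exactly the nonstationary bandit result of \citet{Shah22or} (their Theorem for the corrected UCB with polynomial bonus, stochastic version). The arms are the $|\CA|$ actions, each with polynomially concentrated rewards. The index \eqref{eq:corrected-index} uses $\alpha^{(h+1)},\xi^{(h+1)}$ chosen as there (e.g.\ $\alpha=\xi(1-\eta)$). This yields two things. First, the expected suboptimal-arm pulls satisfy $\mathbb E[N_a(n)]\lesssim n^{\alpha/\xi\cdot(1/(1-\eta))}$ plus constants depending on $\Delta_{\min}$, so the bias $|\mathbb E\hat V_n - V^*|=O(n^{\eta-1})$. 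Second, the node's empirical value satisfies polynomial concentration with $\eta^{(h)}$ and a reduced exponent $\xi^{(h)}$, and a constant $\beta^{(h)}$ given by the recursion defined in Appendix~\ref{appendix:proof-pomdp-mcts-ucb}. This closes the induction step. Iterating down to $h=0$ gives both tails of the theorem with $\beta^{(0)}$ depending on $V_{\max},|\CA|,L,\Delta_{\min},M_Z,\phi_Z$. The lower tail is obtained symmetrically: the optimal arm is pulled at least $n-\sum_{a\ne a^*}N_a(n)$ times, and $\delta_{*,n}$ vanishes at rate $n^{\eta-1}$.

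The main obstacle I expect is the aggregation lemma. The counts $N_z(n)$ are random, and the child deviations are evaluated at random times, so the hypothesis "for all $n$" must be invoked at stopping-like indices. The fix I would try first is a peeling and union bound over the possible values $m\le n$ of $N_z(n)$: bound $\mathbb P(\exists m\le n: m(\bar G_{z,m}-V_z)\ge n^\eta z/M_Z)$ using the hypothesis at each $m$ with threshold $z(n/m)^\eta/M_Z$. Summing geometrically over dyadic $m$ costs only a constant factor provided $\xi^{(h+1)}\eta^{(h+1)}>1$. I expect this requirement to be absorbed into the recursive parameter choice.
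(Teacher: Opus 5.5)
Your overall route matches the paper's. You use backward induction with a Hoeffding base case. The observation-mixing step splits $n(\hat Q_{a,n}-Q_a)$ into $\sum_z N_z(n)(\hat V_{z,N_z(n)}-V_z^*)$ plus a bounded multinomial/martingale term. The action layer then invokes the corrected-UCB non-stationary bandit theorem of \citet{Shah22or}.

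\textbf{The gap.} The peeling fix you propose for the step you flag as the main obstacle does not work. Write $\xi,\eta$ for the child-level parameters and $t\ge 1$ for the deviation level. A union bound over $m\le n$ with thresholds $t(n/m)^{\eta}/M_Z$ costs
\[
\sum_{m=1}^{n}\beta^{(h+1)}\Bigl(\frac{M_Z}{t}\Bigr)^{\xi}\Bigl(\frac{m}{n}\Bigr)^{\eta\xi}\;\asymp\;\frac{n}{1+\eta\xi}\,\beta^{(h+1)}\Bigl(\frac{M_Z}{t}\Bigr)^{\xi},
\]
which is a factor linear in $n$ for every value of $\xi\eta$. Dyadic grouping only controls the times $m=2^k$; there the geometric sum is at most $(1-2^{-\eta\xi})^{-1}$ for any $\eta\xi>0$. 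Reaching the intermediate $m$ would need a maximal inequality, and a fixed-$n$ polynomial tail for a non-stationary, non-martingale return sequence does not supply one. Bounded increments are too crude: inside a block they only bound fluctuations by order $2^k V_{\max}$, far above the $n^{\eta}$ scale. So $\xi\eta>1$ is not the relevant condition, and as planned your first bullet is left unproved.

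\textbf{The missing idea.} No uniform-in-$m$ control is needed, because the random index is independent of the child's returns. Index by visits to $(h,a)$.
\begin{itemize}
\item The selection rule never looks at the hidden state, so at every visit the state is a fresh draw from the belief. Hence the labels $z_1,\dots,z_n$ are i.i.d.\ with law $(p_z)$.
\item Child $z$'s $j$-th return depends only on its own subtree, which is updated only when $z$ is visited, and on the $j$-th state handed to it, which is a fresh draw from $b(haz)$.
\end{itemize}
Hence $N_z(n)$ is independent of $\{G_{z,j}\}_{j\ge 1}$. Conditioning on $N_z(n)=m$ and applying the hypothesis at the deterministic time $m$ gives
\[
\mathbb{P}\bigl(m(\hat V_{z,m}-V_z^*)\ge n^{\eta}t/M_Z\bigr)\le\beta^{(h+1)}\Bigl(\frac{M_Z}{t}\Bigr)^{\xi}\Bigl(\frac{m}{n}\Bigr)^{\eta\xi}\le\beta^{(h+1)}M_Z^{\xi}\,t^{-\xi}.
\]
This is exactly the paper's ``conditioned on $N_z(n)$'' step, via Lemma A.1 of \citet{Shah22or}. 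The same fact is what justifies the stationary $p_z$ in your multinomial term; ``independent given the state'' alone does not give it.

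\textbf{A separate error in the parameters.} Your example $\alpha=\xi(1-\eta)$ lies on the excluded boundary. It gives $\eta^{(h)}=\alpha/(\xi(1-\eta))=1$, i.e.\ no rate and an $O(1)$ bias. You need $\xi\eta(1-\eta)\le\alpha<\xi(1-\eta)$ with $\alpha>2$. The paper's choice $\alpha=\xi\eta(1-\eta)$ keeps $\eta^{(h)}=\eta$ at every depth, as the statement presumes.
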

\begin{proofsketch}
The proof largely follows \citet{Shah22or} with their extension to stochastic models in the Appendix (Lemma A.1). Here we replace the state mixture process in MDP with observation mixture in POMDP. The proof is in Appendix~\ref{appendix:proof-pomdp-mcts-ucb}, with a closed-form parameter design in Appendix~\ref{app:parameter-design}.
\end{proofsketch}

\vspace{-0.3cm}

Theorem~\ref{theorem:pomdp-mcts-ucb} provides a finite-time analysis of the empirical value estimate $\hat V_n(b_k)$ of MCTS-style solvers in a discrete setting, specifically for the Corrected-POMCP. The theorem establishes both convergence and polynomial concentration bounds for the empirical value estimate. The polynomial concentration result provides high-probability bounds on the deviation of the empirical value estimate from the true value, with the deviation shrinking polynomially in the number of samples $n$.
The bounds are given in terms of the parameters $\beta^{(0)}$, $\xi^{(0)}$, and $\eta^{(0)}$, which depend on the problem parameters and are updated iteratively from the leaf nodes to the root node.


\section{Finite-time Analysis in Continuous POMDP}
This section extends the finite-time analysis to the continuous setting. We first identify a notable gap in the existing literature concerning theoretical analysis for POMDP planning in continuous spaces. The finite-time analyses for POMDPs include the probabilistic concentration bounds described in Section~\ref{section:discrete}, which require a finite branching factor in the belief tree, as well as the deterministic bounds provided in~\citep{Barenboim26aij}, which are similarly restricted to discrete POMDPs.

In this section, we introduce an adaptive partitioning scheme for continuous spaces in POMDP planning and establish a theoretical foundation for  a  finite-time analysis in continuous POMDPs. Specifically, we focus on the continuous observation space, as the continuous action space can be addressed through a similar progressive widening approach, as demonstrated in VOMCPOW~\citep{Lim21cdc}; the extension is straightforward.

\subsection{Abstract Partitioning of Continuous Spaces}
\label{subsec:abstract-partition}
To address the infinite branching factor arising from continuous observation spaces, we propose an abstract partitioning scheme to discretize the continuous space.

We define a partition for each belief node with history $h,a$ as $\mathcal{P}(h,a)$, where each belief node has its own partition and each partition comprises a collection of cells $\mathcal{C}(h,a) = \{c^i(h,a)\} \in \mathcal{P}(h, a)$. We denote the number of cells as $m(h,a)\triangleq\big|\voro(h, a)\big|$. For simplicity, we will omit the history for each cell in this section.

For each cell $c$, we denote the set of observations associated with the cell as $\mathcal{Z}(c) = \{z \in c\}\subseteq \mathcal{Z}$ and designate a representative center $z^c(c)$. For simplicity, we denote the center as $z^c$ in the later part.
The boundaries between cells may be either explicit or implicit, depending on the specific partitioning method employed. Specifically, in conjunction with the partitioning, we define a general metric $D(\cdot, \cdot)$ to measure the distance between two observation realizations in the corresponding observation space. Typically, observations are assigned to the nearest cell center according to this metric, i.e.~$\mathcal{Z}(c|\mathcal P(h,a)) = \bigl\{
	z \;\big|\;
	D(z, z^c) \le D(z, z^{c'}),
	\ \forall\, z^{c'} \neq z^c
	\bigr\}$.


%
Based on this partitioning, we modify the history representation in POMDPs to incorporate cell information in place of raw observations:
\(    \th_t = (b_0, a_0, c_1, a_1, c_2, \cdots, a_{t-1}, c_t).
\) 
However, for partitionings with explicitly defined boundaries, the cell information is difficult to represent directly. We propose an alternative, commonly used approach that utilizes only the center point $z^c$ to represent all observations within a given cell. Under this formulation, the history becomes:
\(    \th_t = (b_0, a_0, z^c_1, a_1, z^c_2, \cdots, a_{t-1}, z^c_t).
\) 

Under this center point representation, the value function is defined as:
\(
		V^{p}(\tb_k)
		= r(b_k, \pi^{p}(\tb_k))  +
		\sum_{c_{k+1}} P(c_{k+1} | \tb_k, \pi^p(\tb_k)) V^{p}(\tb_{k+1}),
\)
where  $\pi^p$ is a policy with policy space $\Pi^{p}: \tilde{\mathcal{H}}_t \rightarrow \mathcal{A}$, where $\tilde{\mathcal{H}}_t$ is the set of all possible histories $\th_t$.
The belief update is given by:
\(    \tb_{k+1}\triangleq P(x_{k+1}\mid \th_{k+1}) = \psi(\tb_k, \pi^p(\tb_k), z^c_{k+1}).
\)

\begin{remark}[Justification for Center Point Representation]
    The center point representation for each cell is commonly used in many MCTS-style solvers that partition the action space, e.g.~\citep{Lim21cdc,Hoerger24ijrr}. The intuition is that if the cell is sufficiently small, the center point serves as a good representation of the cell. Although the center point representation sacrifices some fidelity compared to the full observation representation, it will facilitate the MCTS algorithm by visiting the same belief multiple times and improve the estimation of the value function at that belief.
\end{remark}
\vspace{-0.1cm}

\subsection{Theoretical Analysis}



\begin{figure}[t]
    \centering
    \includegraphics[width=0.5\columnwidth]{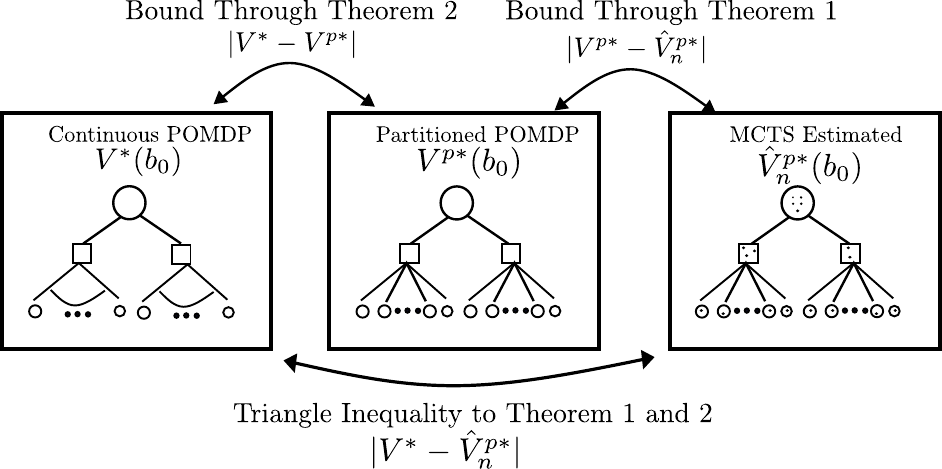}
    \caption{The flow of the theoretical analysis. We bound the continuous POMDP with the partitioned POMDP at the theoretical value function level. Then we bound the theoretical and MCTS-estimated value function for the partitioned POMDP. Finally, we use the triangle inequality to combine the two bounds to obtain a final bound for the MCTS estimates in continuous POMDPs. 
    }
    \label{fig:bound-flow}
\end{figure}


In this section, we provide a probabilistic bound for the MCTS estimation error in continuous POMDPs:
$| V^*(b_0) - \hat{V}^{p*}_n(b_0) |.$
Here, $V^*(b_0)$ is the theoretical value function for continuous POMDP and $\hat{V}^{p}_n(b_0)$ is the MCTS-estimated value function after $n$ iterations for the partitioned POMDP.

Our analysis proceeds in two steps: First, we establish a bound between the theoretical value function for the original problem and for the partitioned POMDP, $|V^*(b_0) - V^{p*}(b_0)|$. Then, as we have already formed  a finite-branching-factor POMDP via the abstract partitioning, we directly apply Theorem~\ref{theorem:pomdp-mcts-ucb} to obtain a bound  between the MCTS-estimated and the theoretical value function for the partitioned POMDP: $|V^{p*}(b_0) - \hat{V}^{p*}_n(b_0)|$.
Finally, we use the triangle inequality to combine the two bounds to obtain a final bound for the MCTS estimates in continuous POMDPs,
\begin{align}
    \label{eq:bound-decompose}
|V^*(b_0) - \hat{V}^{p*}_n(b_0)|
&\le |V^*(b_0)-V^{p*}(b_0)|+|V^{p*}(b_0)-\hat{V}^{p*}_n(b_0)|.
\end{align}
The overall flow of the theoretical analysis is shown in Fig.~\ref{fig:bound-flow}.

Before presenting the bounding theorem between the theoretical value function for the original problem and for the partitioned POMDP, we first state the assumptions and requirements for the POMDP setting.
\begin{assumption}[H\"older continuity of belief update in observation]\label{ass:holder-obs}
There exist constants $L_\psi>0$ and $\alpha\in(0,1]$ such that for all $b\in\mathcal B$, $a\in\mathcal A$ and $z,z'\in\mathcal Z$, \(W^1\big(\psi(b,a,z),\,\psi(b,a,z')\big)\le L_\psi D(z,z')^{\alpha}\), where $W^1$ is the Wasserstein-$1$ distance, and $\psi$ is the belief update function.
\end{assumption}

\begin{assumption}[H\"older continuity of the optimal value in belief]\label{ass:holder-V-function}
There exist constants $L_V>0$ and $\beta\in(0,1]$ such that for all $b,b'\in\mathcal B$, \(\big|V^*(b)-V^*(b')\big|\le L_V W^1(b,b')^{\beta}\).
\end{assumption}
%
%
\noindent We define the node-wise covering radius and cell diameter by \(\varepsilon_{\th a}:=\sup_{z\in\mathcal Z}\min_{c\in\voro(\cdot\mid \th a)} D\big(z, z^{c}(\th a)\big)\) and \(\operatorname{diam}\!\big(\voro(\cdot\mid \th a)\big):=\max_{c\in\voro(\cdot\mid \th a)}\sup_{z,z'\in c} D(z,z')\).

\begin{assumption}\label{ass:covering_dvp}
There exist constants $C_{\mathrm{cov}}>0$ and $k_{Z}>0$ such that for all $r\in(0,1]$, \(N(Z,r)\le C_{\mathrm{cov}} r^{-k_{Z}}\) and \(\varepsilon_{\th a}\le 2r\),
where $N(Z,r)$ is the smallest number of $k_Z$-balls of radius $r$ needed to cover $Z$.
\end{assumption}

\begin{assumption}\label{ass:ball-mass}
Fix any history--action node $\th a$ that can appear in the search tree. Let $\nu_{\th a}$ denote the (marginal) distribution on
$\mathcal{Z}$ of observations generated at node $\th a$, i.e., $P(z|\th,a)$.
There exist constants $c_\nu>0$ and $r_0>0$ (uniform over all such $\th a$) such that for all $z\in \mathcal Z$ and all $r\in(0,r_0]$, \(\nu_{\th a}\big(B_Z(z,r)\big)\ge c_\nu r^{k_{Z}}\), where \(B_Z(z,r)\triangleq \{z'\in \mathcal Z \!:\  \!\! D(z,z')\le r\}\).
\end{assumption}

\begin{remark}[Justification for Assumptions]
The assumptions adopted in this work are mild and commonly used in the literature. The H\"older continuity assumption is a relaxed version for Lipschitz continuity, which is used in analyzing continuous POMDP planning~\citep{Kim20aaai,Lim21cdc}. Our polynomial shrinkage ratio for covering numbers similarly constitutes a relaxed version of the exponential shrinkage ratio considered in~\citep{Kim20aaai}. 
\end{remark}

Then, we get the bounding theorem for partitioning loss.

\begin{theorem}\label{theorem:partition-loss}
Assume Assumptions~\ref{ass:holder-obs}--\ref{ass:ball-mass}. Fix a planning horizon $L$ and let
$\mathcal{\tilde{H}}_L$ be the set of reachable history--action nodes up to depth $L$.
At each $\th a\in\mathcal{ \tilde {H}}_L$, replace the raw observation $z$ by the nearest center
$\hat z=q_{\th a}(z)$ from $m(\th a)$ centers, with covering radius
$\varepsilon_{\th a}:=\sup_{z\in\mathcal Z}\min_j D\big(z,z^{c,j}(\th a)\big)$.
Then for any $\delta\in(0,1)$, with probability at least $1-\delta$,
\begin{align}
 |V^*(b_0)-V^{p*}(b_0) |
\le
\frac{\gamma(1-\gamma^{L})}{1-\gamma}\;
L_V L_\psi^\beta\;
\max_{\th a\in\mathcal{ \tilde{H}}_L}
\left[
C\left(\frac{\log\!\big(\frac{m(\th a)|\mathcal{ \tilde{H}}_L|}{\delta}\big)}{m(\th a)}\right)^{\frac{1}{k_{Z}}}
\right]^{\alpha\beta},
\label{eq:center-loss-concise2}
\end{align}
where $C>0$ is a constant depending only on $(C_{\mathrm{cov}},c_\nu,r_0, k_{Z})$, and $|\tilde{\mathcal{H}}_L|$ is the size of history nodes at depth $L$.
\end{theorem}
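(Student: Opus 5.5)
The bound splits into a deterministic part and a probabilistic part. The deterministic part is a per-step error propagation through the Bellman recursion. The probabilistic part is a random-sampling covering bound on $\varepsilon_{\th a}$ at every node, followed by a union bound over nodes.

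\textbf{Step 1: one-step error via the Hölder assumptions.} Fix a node $\th a$ and a raw observation $z$ with quantized center $\hat z=q_{\th a}(z)$, so that $D(z,\hat z)\le\varepsilon_{\th a}$. Assumption~\ref{ass:holder-obs} gives
\[
W^1\big(\psi(b,a,z),\psi(b,a,\hat z)\big)\le L_\psi\varepsilon_{\th a}^{\alpha}.
\]
Assumption~\ref{ass:holder-V-function} then gives
\[
|V^*(\psi(b,a,z))-V^*(\psi(b,a,\hat z))|\le L_V L_\psi^\beta\varepsilon_{\th a}^{\alpha\beta}.
\]
Taking the expectation over $z\sim\nu_{\th a}$, the $Q$-values of the original and center-represented backups at $\th a$ differ by at most $\gamma L_V L_\psi^\beta\varepsilon_{\th a}^{\alpha\beta}$, provided the downstream value is the same function.

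\textbf{Step 2: telescope over the horizon.} I will write $V^*(b_0)-V^{p*}(b_0)$ as a telescoping sum of hybrid value functions: the first $k$ levels use centers and the rest use raw observations. The optimal policy is handled with the standard max-difference inequality $|\max_a f-\max_a g|\le\max_a|f-g|$. Each level contributes at most $\gamma^{k}L_VL_\psi^\beta\max_{\th a}\varepsilon_{\th a}^{\alpha\beta}$. Summing $k=1,\dots,L$ yields the prefactor $\gamma(1-\gamma^L)/(1-\gamma)$.

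The subtle point is that Assumption~\ref{ass:holder-V-function} is stated for $V^*$, not for the partially quantized or depth-truncated value functions. I would argue that every hybrid value is an optimal value of a modified POMDP, so the same Hölder constant applies. If that fails, I would instead assume the Hölder property holds uniformly for the relevant class of value functions. This is the step I expect to be most delicate.

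\textbf{Step 3: covering radius with high probability.} The centers at $\th a$ are $m=m(\th a)$ observations sampled from $\nu_{\th a}$. Take a minimal $r$-cover of $\mathcal Z$ with $N(Z,r)\le C_{\mathrm{cov}}r^{-k_Z}$ balls. If every ball of radius $r$ around a cover point contains a center, then $\varepsilon_{\th a}\le 2r$.

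By Assumption~\ref{ass:ball-mass}, each such ball is missed by all $m$ samples with probability at most $(1-c_\nu r^{k_Z})^m\le e^{-mc_\nu r^{k_Z}}$. A union bound gives a total failure probability of at most $C_{\mathrm{cov}}r^{-k_Z}e^{-mc_\nu r^{k_Z}}$.

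Setting this to $\delta/|\tilde{\mathcal H}_L|$ and choosing
\[
r^{k_Z}=\frac{\log\!\big(C'm|\tilde{\mathcal H}_L|/\delta\big)}{c_\nu m}
\]
works, since $r^{-k_Z}\le m$ once $m$ is large enough for $r\le\min(r_0,1)$. This gives
\[
\varepsilon_{\th a}\le C\big(\log(m|\tilde{\mathcal H}_L|/\delta)/m\big)^{1/k_Z},
\]
with constants absorbed into $C$. Nodes where $m$ is too small can be covered by enlarging $C$, using boundedness of the value function.

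\textbf{Step 4: union bound over nodes.} A union bound over all $\th a\in\tilde{\mathcal H}_L$ makes the Step 3 bound hold simultaneously at every node with probability at least $1-\delta$. Substituting it into the Step 2 bound gives \eqref{eq:center-loss-concise2}.

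A minor technical issue is measurability and dependence: the tree nodes themselves are random. I would condition on the node sequence and use the fact that the centers are fresh samples from $\nu_{\th a}$ given the node.
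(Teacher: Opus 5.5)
Your proposal is correct and follows essentially the same route as the paper: a H\"older one-step bound, accumulation with factors $\gamma^k$ giving $\gamma(1-\gamma^L)/(1-\gamma)$, the covering-radius tail bound $C_{\mathrm{cov}} r^{-k_Z} e^{-m c_\nu r^{k_Z}}$ solved at level $\delta/|\tilde{\mathcal H}_L|$, and a union bound over nodes. Your telescoping over hybrids is the unrolled form of the paper's recursion through the intermediate $Q^m$ (quantize only the current observation, use $V^*$ afterwards). With your ordering (centers on the first $k$ levels, raw observations afterwards), consecutive hybrids differ at a single level that is followed only by raw levels, so Assumption~\ref{ass:holder-V-function} applies to $V^*$ directly and the step you flag as delicate needs neither the ``modified POMDP'' argument nor any extra assumption.
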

\begin{proofsketch}
    This can be proved by first bounding the error at each node directly using the assumptions, and then applying a union bound over all nodes. The proof is provided in Appendix~\ref{app:proof_theorem_2}.
\end{proofsketch}

We now present our \textit{main theorem} that bounds  the MCTS estimation error in continuous POMDPs. The following Theorem provides finite-time guarantees for MCTS in continuous POMDPs if the partitioning follows the above assumptions. 

\begin{theorem}[MCTS Estimation Bound in Continuous POMDPs]\label{theorem:overall-error}
Assume Assumptions~\ref{ass:holder-obs}--\ref{ass:ball-mass} and Assumption~\ref{assumption:bounded-reward}--~\ref{assumption:observation-factor} hold. Assume moreover that Theorem~\ref{theorem:pomdp-mcts-ucb} applies to the \emph{partitioned} (finite-branching) POMDP induced by the adaptive abstract partition, yielding concentration parameters $(\beta^{(0)},\xi^{(0)},\eta^{(0)})$ for the corresponding corrected-UCB MCTS estimator. Let $\hat V_n^{\,p}(b_0)$ be the root value estimate after $n$ simulations on the partitioned tree, and let $V_p^*(b_0)$ be the optimal value for the partitioned problem. Then for any $\delta_1,\delta_2\in(0,1)$, with probability at least $1-(\delta_1+\delta_2)$,
\begin{align}
\bigl|\hat V_n^{\,p}(b_0)-V^*(b_0)\bigr|
\le&
n^{-(1-\eta^{(0)})}\Bigl(\frac{2\beta^{(0)}}{\delta_1}\Bigr)^{1/\xi^{(0)}}
\nonumber \\ +&  
\frac{\gamma(1-\gamma^L)}{1-\gamma}\,L_V L_\psi^\beta\,
\max_{\tilde h a\in H_L}
\left[
C\left(
\frac{\log\!\bigl(\frac{m(\tilde h a)\,|H_L|}{\delta_2}\bigr)}{m(\tilde h a)}
\right)^{1/k_Z}
\right]^{\alpha\beta}.
\end{align}
In particular, taking $\delta_1=\delta_2=\delta/2$ yields a bound holding with probability at least $1-\delta$.
\end{theorem}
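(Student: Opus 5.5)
The plan is to follow the decomposition \eqref{eq:bound-decompose}: bound each of the two terms on the right by a separate high-probability event, then combine them with a union bound.

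\textbf{Step 1: the MCTS term.} By hypothesis, Theorem~\ref{theorem:pomdp-mcts-ucb} applies to the partitioned tree. That tree has finite branching, since each node $\tilde h a$ has $m(\tilde h a)$ children, and Assumptions~\ref{assumption:bounded-reward}--\ref{assumption:observation-factor} hold for it. So for every $z\ge1$ both one-sided tails of $n(\hat V^{\,p}_n(b_0)-V_p^*(b_0))$ beyond $n^{\eta^{(0)}}z$ are at most $\beta^{(0)}z^{-\xi^{(0)}}$. A union bound over the two tails gives
\begin{align*}
\mathbb{P}\bigl(|\hat V^{\,p}_n(b_0)-V_p^*(b_0)|\ge n^{-(1-\eta^{(0)})}z\bigr)\le 2\beta^{(0)}z^{-\xi^{(0)}}.
\end{align*}
Setting the right-hand side equal to $\delta_1$ gives $z=(2\beta^{(0)}/\delta_1)^{1/\xi^{(0)}}$. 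We need this $z$ to satisfy $z\ge 1$, which holds because $\beta^{(0)}>1$ and $\delta_1<1$. Hence, with probability at least $1-\delta_1$, the first term of the claimed bound holds.

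\textbf{Step 2: the partition term.} Apply Theorem~\ref{theorem:partition-loss} with confidence $\delta_2$. With probability at least $1-\delta_2$, $|V^*(b_0)-V_p^*(b_0)|$ is at most the second term of the statement. Here $H_L$ is identified with $\tilde{\mathcal H}_L$, and $V_p^*$ with $V^{p*}$.

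\textbf{Step 3: combine.} Let $E_1$ and $E_2$ be the events from Steps 1 and 2. A union bound on their complements gives $\mathbb{P}(E_1\cap E_2)\ge 1-(\delta_1+\delta_2)$. On $E_1\cap E_2$ the triangle inequality \eqref{eq:bound-decompose} gives the stated bound. Choosing $\delta_1=\delta_2=\delta/2$ gives the final remark.

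\textbf{Main obstacle.} The delicate point is the meaning of the probability in Step 2. The partition, meaning the centers and hence $m(\tilde h a)$, is random. Theorem~\ref{theorem:pomdp-mcts-ucb} is a statement about a fixed finite-branching POMDP, whereas here the tree and the randomness in MCTS are coupled. I would first condition on the realized partition (the sampled centers). I would then apply Theorem~\ref{theorem:pomdp-mcts-ucb} conditionally; its bound is uniform in the partition because $\beta^{(0)}$ and $\xi^{(0)}$ depend only on problem constants, as the hypothesis asserts. Taking expectations then gives an unconditional bound of $\delta_1$, which makes the union bound legitimate without any independence between $E_1$ and $E_2$. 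If the partition is built adaptively during the search, the conditioning argument is replaced by simply invoking the assumed applicability of Theorem~\ref{theorem:pomdp-mcts-ucb} as stated.
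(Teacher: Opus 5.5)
Your proposal is correct and matches the paper's proof. Both use the triangle-inequality split into the partition error, bounded by Theorem~\ref{theorem:partition-loss} at level $\delta_2$, and the MCTS estimation error, bounded by Theorem~\ref{theorem:pomdp-mcts-ucb} at level $\delta_1$, then combine them with a union bound; your two-tail inversion $z=(2\beta^{(0)}/\delta_1)^{1/\xi^{(0)}}\ge 1$ fills in a step the paper leaves implicit. The paper goes further only by justifying the hypothesis you invoke for adaptive partitions: instead of conditioning on the realized partition, which does not directly work when the centers come from the search's own samples (as you partly anticipate), it replaces the multinomial observation-noise step in Part~A of Theorem~\ref{theorem:pomdp-mcts-ucb} with an Azuma--Hoeffding bound on the martingale $\sum_t \bigl(V^*_{\tilde z_t}-\mathbb{E}[V^*_{\tilde z_t}\mid\mathcal F_{t-1}]\bigr)$.
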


\begin{proofsketch}
The result follows by applying a union bound to combine the partitioning loss bound in Theorem~\ref{theorem:partition-loss} with the MCTS estimation bound for the discrete setting in Theorem~\ref{theorem:pomdp-mcts-ucb}. The observation noise from nonstationary partitioning can be addressed by a martingale analysis as discussed in the detailed proof in Appendix~\ref{proof:overall-error}.
\end{proofsketch}




We can employ progressive widening in MCTS to realize the adaptive partitioning,  the number of cells $m(\tilde{h}a)$ at node $\tilde{h}a$ is controlled so that
$m(\tilde{h}a) \;\le\; k_z\, N(ha)^{\alpha_z}.$
The analysis of the partitioning in Theorem~\ref{theorem:partition-loss} is general and can be applied to any partitioning method that satisfying the stated assumptions, and Theorem~\ref{theorem:pomdp-mcts-ucb} requires a fixed finite branching factor. Consequently, for the purpose of theoretical analysis, we consider a ``frozen'' instance of partitioning from the progressive widening process at a finite time.
 It is worth noting that in the adaptive setting, the observation mixture step in the proof of Theorem~\ref{theorem:pomdp-mcts-ucb} requires minor modifications but yields equivalent results, as detailed in proof of Theorem~\ref{theorem:overall-error} in Appendix~\ref{proof:overall-error}. A practical solver utilizing this framework is presented in the subsequent section.
A parameter design with closed-form representation is given in Appendix~\ref{app:parameter-design}.


\subsection{Voro-POMCPOW}

Voronoi partitioning is a practical method to partition the continuous space, where the border of each partition is defined by the distributions of the sampled control points in a non-stationary way.
In each Voronoi cell, all the points are closer to its center point than any other center points.
In POMDP planning, \citet{Lim21cdc} use Voronoi partitioning to handle a continuous action space, adopting Voronoi Optimistic Optimization (VOO) to balance exploration and exploitation in the continuous action space, and \citet{Kim20aaai} use it for deterministic MDPs.
However, it remains an open problem how to use Voronoi partitioning to discretize the state and observation spaces in POMDPs.

We propose to use Voronoi partition as the abstract partitioning introduced in the previous section to discretize the continuous observation space. For Voronoi partition, it is easy to satisfy the requirements of Assumptions~\ref{ass:holder-obs}--\ref{ass:ball-mass}.
 The partitioning is based on the distribution of the center points (or control points) $z^c$. Each cell contains the observations that are closer to its center point than any other center points. As the sampling process proceeds, the partitioning becomes non-stationary but finer. Since we use the center points to represent cells, the samples will not be re-assigned to new cells when created.

The proposed method, namely Voro-POMCPOW, is shown in Algorithm~\ref{alg:voro-pomcpow}. It extends the progressive widening mechanism in observation space of POMCPOW to a Voronoi-based Progressive Widening, which can lead to the finite-time guarantee. For this work, we only consider the Voronoi partitioning in the observation space and consider the discrete action space. It is trivial to extend it to the continuous action space by using Voronoi Optimistic Optimization (VOO) method as in~\citep{Lim21cdc}.

In this work, we consider a simple version of Voronoi partitioning, where each cell is represented by the center point only and the sample points will be used only for choosing cells and then be discarded. This follows the definition of in-cell estimation represented by the cell points only in Section~\ref{subsec:abstract-partition}.

\section{Empirical Results}
This section evaluates the empirical performance of the proposed solvers in comparison to POMCPOW. Then we present the practical implementation of the probabilistic finite-time guarantee for MCTS in continuous POMDPs. We present a limited empirical evaluation here due to the page limit, and the contribution of this paper is primarily theoretical.


\begin{table}[t]
\centering
\hfill
\begin{minipage}[t]{0.4\textwidth}
\centering
\caption{Mean returns for 10-step simulations over 100 episodes on Modified LightDark 1D.}
\label{tab:pomcpow-mean-results}
\small
\setlength{\tabcolsep}{3pt}
\begin{tabular}{l r}
\toprule
Method & Mean Return $\pm$ std  \\
\midrule
Voro-POMCPOW & 6.036653$\pm$ 0.273107  \\
POMCPOW   & 6.094985$\pm$0.300759    \\
\bottomrule
\end{tabular}
\end{minipage}
\hfill
\begin{minipage}[t]{0.4\textwidth}
\centering
\caption{Average post-run examination results for proposed bounds in Thm.~\ref{theorem:overall-error}. }
\label{tab:lightdark-post-run-bound-5000}
\small
\setlength{\tabcolsep}{3pt}
\begin{tabular}{c c c c}
\toprule
Confidence  & Proposed Bound in Thm.~\ref{theorem:overall-error} \\
\midrule
$80\%$  & 1.992210 \\
$85\%$  & 2.003508 \\
$90\%$  & 2.022460 \\
\bottomrule
\end{tabular}
\end{minipage}
\hfill
\vspace{-0.3cm}
\end{table}

We evaluate our method on the LightDark 1D benchmark, which has a discrete action space and a continuous observation space. We use POMCPOW as the baseline and compare it with the proposed Voro-POMCPOW. Table~\ref{tab:pomcpow-mean-results} reports the mean returns with standard deviations over 100 episodes. Although our method uses a more conservative polynomial exploration bonus to enable finite-time guarantees, it achieves competitive empirical performance compared with POMCPOW.
Table~\ref{tab:lightdark-post-run-bound-5000} reports the post-run average results of our proposed bound in practice under different confidence levels in the modified LightDark 1D environment with 5000 iterations.
The results demonstrate the practicality of the proposed bound in planning problems. The detailed experimental setup is provided in Appendix~\ref{appendix:experiment}.

\section{Conclusion}
This work presents the first finite-time analysis of MCTS in POMDPs, addressing the challenge of non-stationary value estimates. We established polynomial concentration bounds for the discrete setting and introduced an abstract partitioning framework for continuous observation spaces, shown by the Voro-POMCPOW. We also present empirical results for our proposed methods.
These contributions not only bridge the gap between theory and practice in POMDP planning but also offer new analytical tools applicable to continuous MDPs.

\paragraph{Limitations}
Our analysis assumes a fixed horizon planning problem and relies on assumptions on the observation space. Extending these results to infinite horizons remains an open challenge. Besides, this analysis can be extended to PFT-DPW-style algorithms, which is left for future work. We can also test the proposed bounds on more complex environments and solvers, which is left for future work as well.


\section{Acknowledgments}
This work was supported by the Israel Ministry of Innovation, Science and Technology.

\bibliographystyle{plainnat}
\bibliography{refs}

\clearpage
\appendix
\section*{Supplementary Materials}

\section{Algorithms and Parameter Design}
\subsection{Algorithm: Corrected-POMCP}
The algorithm for Corrected-POMCP, with the corrected UCB for the action selection process, is shown below:

\begin{algorithm}
\caption{Corrected-POMCP}
\label{alg:corrected-pomcp}
\begin{algorithmic}[1]
\Require History $h$, Generative Model $\mathcal{G}$, Parameters $\beta, \xi, \alpha, \eta$, Planning horizon $L$
\Function{Search}{$h$}
    \While{Budget $> 0$}
        \State $s \sim \mathcal{B}(h)$ \Comment{Sample state from belief}
        \State \Call{Simulate}{$s, h, 0$}
    \EndWhile
    \State \Return $\arg\max_{a \in \mathcal{A}} Q(h, a)$
\EndFunction
\Statex
\Function{Simulate}{$s, h, d$}
    \If{$d \ge L$} \Return $0$ \EndIf
    \If{$h \notin Tree$} \Comment{Expansion}
        \State Initialize $N(h)=0, \forall a: N(h,a)=0, Q(h,a)=0$
        \State \Return \Call{Rollout}{$s, h, d$}
    \EndIf

    \State \textbf{$a^* \leftarrow \arg\max_{a} \left( Q(h, a) + \frac{\beta^{1/\xi} \cdot N(h)^{\alpha/\xi}}{N(h, a)^{1-\eta}} \right)$} \Comment{\textbf{Polynomial Exploration Bonus} }

    \State $(s', z, r) \sim \mathcal{G}(s, a^*)$ \Comment{Call Generative Model}
    \State $R \leftarrow r + \gamma \cdot \Call{Simulate}{s', (h, a^*, z), d+1}$

    \State $N(h) \leftarrow N(h) + 1; \quad N(h, a^*) \leftarrow N(h, a^*) + 1$ \Comment{Update Statistics}
    \State $Q(h, a^*) \leftarrow Q(h, a^*) + \frac{R - Q(h, a^*)}{N(h, a^*)}$
\EndFunction
\end{algorithmic}
\end{algorithm}

\subsection{Algorithm: Voro-POMCPOW}
The algorithm for Voro-POMCPOW, which incorporates Voronoi partitioning for continuous observation spaces, is shown below in Algorithm~\ref{alg:voro-pomcpow}.
\begin{algorithm}
\caption{Voro-POMCPOW}
\label{alg:voro-pomcpow}
\begin{algorithmic}[1]
\Procedure{Simulate}{$s, h, d$}
    \If{$d \ge L$}
        \State \Return $0$
    \EndIf

    \State $a \gets \textsc{ActionSelection}(h)$
    \State $(s', z, r) \gets G(s, a)$

    \If{$\lvert \voro(z \mid ha) \rvert \le k_z N(ha)^{\alpha_z}$}
        \State Create a new Voronoi cell $c(z \mid ha)$ with center $z$
        \State $\voro(z \mid ha) \gets \voro(z \mid ha) \cup \{c(z \mid ha)\}$
        \State $total \gets r + \gamma \,\textsc{Rollout}(s', haz, d+1)$
        \State Update value estimates $V(hac)$
    \Else
        \State Find nearest Voronoi cell $c^i \in \voro(z \mid ha)$ to observation $z$
        \State $z \gets \text{Cell center } z^c \text{ of } c^i$
        \State $total \gets r + \gamma \,\textsc{Simulate}(s', haz, d+1)$
        \State Update value estimates $V(hac)$
    \EndIf

    \State $N(h) \gets N(h) + 1$
    \State $N(ha) \gets N(ha) + 1$
    \State $Q(ha) \gets Q(ha) + \dfrac{total - Q(ha)}{N(ha)}$
    \State \Return $total$
\EndProcedure
\end{algorithmic}
\end{algorithm}

\subsection{Parameter Design}
\label{app:parameter-design}
Theorem~\ref{theorem:overall-error} uses two groups of parameters: the corrected-UCB parameters inherited from Theorem~\ref{theorem:pomdp-mcts-ucb} and the progressive-widening parameters \(k_z,\alpha_z\). For the corrected-UCB part, we can fix \(\eta\in[1/2,1)\) without depth dependence, set \(\kappa=\eta(1-\eta)\), and choose a root tail exponent \(\xi^{(0)}>1\). We set
\begin{align}
\xi^{(0)}=\rho,\qquad
\xi^{(\ell+1)}=\frac{\xi^{(\ell)}+1}{\kappa},
\qquad
\alpha^{(\ell+1)}=\xi^{(\ell)}+1=\kappa\xi^{(\ell+1)},
\end{align}
for \(\ell=0,\ldots,L-1\). Hence, at a node \(h\) of depth \(\ell\),
\begin{align}
\frac{\alpha^{(\ell+1)}}{\xi^{(\ell+1)}}=\kappa,
\end{align}
and the corrected-UCB bonus can be implemented as
\begin{align}
B(h,a)
=
c_\ell
\frac{N(h)^{\eta(1-\eta)}}{N(h,a)^{1-\eta}},
\end{align}
where \(c_\ell\) absorbs the conservative concentration constant. In experiments, we use
\begin{align}
c_\ell=c_0V_{\max,\ell},
\qquad
V_{\max,\ell}
=
R_{\max}\frac{1-\gamma^{L-\ell}}{1-\gamma},
\end{align}
with \(V_{\max,\ell}=(L-\ell)R_{\max}\) when \(\gamma=1\), and tune only \(c_0\). For the default choice \(\eta=1/2\), this becomes
\begin{align}
B(h,a)
=
c_\ell
\frac{N(h)^{1/4}}{\sqrt{N(h,a)}}.
\end{align}
Thus the theoretical parameters \(\xi^{(\ell)}\) and \(\alpha^{(\ell)}\) are depth-dependent, but the count exponents of the implemented bonus are not.

For continuous observations, we use progressive widening
\(
m(\tilde h a)\le k_z N(\tilde h a)^{\alpha_z}.
\)
We can set the progressive widening parameters according to the continuous space and the desired partitioning rate. 


\section{Experimental settings}
\label{appendix:experiment}
Experimental settings are as follows.

\subsection{Modified Light-Dark 1D Model}
The Light-Dark 1D environment is a continuous-state POMDP defined as follows:
\begin{itemize}
    \item \textbf{Transitions:} $x' = x + a \cdot \Delta + w$, where $w \sim \mathcal{N}(0, \sigma_{\text{tr}}^2)$ and $\Delta$ is the step size.
    \item \textbf{Observations:} $o \sim \mathcal{N}(x', \sigma_{\text{obs}}(x')^2)$. The noise scales with distance from the light source $x_{\text{light}}$:
    \begin{align}
        \sigma_{\text{obs}}(x) = \sigma_{\text{min}} + \alpha |x - x_{\text{light}}|
    \end{align}
    where $\alpha$ is the observation slope.
    \item \textbf{Reward:} $R(s, a, s') = -|s' - g| - C_{\text{step}}$, where $g$ is the goal location.
\end{itemize}
\textbf{Parameters:} Step size $\Delta=1.0$, $\sigma_{\text{tr}}=0.1$, $x_{\text{light}}=0$, $\sigma_{\text{min}}=0.1$, $\alpha=0.5$, goal $g=0$ with radius $0.2$, $C_{\text{step}}=0.1$, and $\gamma=0.95$.

\subsection{Parameters}
\paragraph{Modified 1D LightDark environment.}
We use a compact, theorem-friendly variant of the 1D LightDark POMDP. The state, observation, and action spaces are
$\mathcal{X}=[-1,1], \ \mathcal{Z}=[-1.5,1.5], \ \mathcal{A}=\{-0.4,0,0.4\}.$
The light source is located at $x_{\mathrm{light}}=0$, and the goal is at $x_{\mathrm{goal}}=0.8$. The transition model is
\begin{align}
x_{t+1}
&=
\operatorname{Gate}_{[-1,1]}\bigl(x_t+a_t+w_t\bigr),
w_t \sim \mathrm{TruncNormal}(0,\sigma_T^2;[-w_{\max},w_{\max}]),
\end{align}
where $\operatorname{Gate}_{[-1,1]}(y)=\min\{1,\max\{-1,y\}\}$, $\sigma_T=0.02$, and $w_{\max}=0.06$. The observation noise is state dependent: $\sigma_o(x)=\sigma_{\min}+(\sigma_{\max}-\sigma_{\min})|x-x_{\mathrm{light}}|,$
with $\sigma_{\min}=0.05$ and $\sigma_{\max}=0.35$. To ensure a nonzero observation-density floor, observations are sampled from the mixture
\begin{align}
z_t
&\sim
(1-k_{\mathrm{floor}})
\mathrm{TruncNormal}
\bigl(x_t,\sigma_o(x_t)^2;[-1.5,1.5]\bigr)
+
k_{\mathrm{floor}}
\mathrm{Unif}([-1.5,1.5]),
\end{align}
where $k_{\mathrm{floor}}=0.05$.
The reward is bounded in $[0,1]$:
\begin{align}
r(x,a)
&=
1-
\min\left\{
1,
\frac{|x-x_{\mathrm{goal}}|}{2}
+
\lambda \frac{|a|}{0.4}
+
\tau\,\mathrm{r}_{a}(a)
\right\},
\end{align}
where $\lambda=0.05$, $\tau=10^{-4}$, and $\mathrm{r}_{a}(-0.4)=0,\ \mathrm{r}_{a}(0)=1,\ \mathrm{r}_{a}(0.4)=2.$
The initial belief is $b_0=\mathrm{TruncNormal}(-0.6,0.15^2;[-1,1]).$

\paragraph{Experiment setting.}
We use horizon \(L=3\), discount \(\gamma=0.95\), and \(n=5000\) simulations
per search. Observation progressive widening uses \(k_z=8\) and
\(\alpha_z=0.5\). The depth-scaled corrected-UCB bonus uses \(\eta=1/2\),
\(c_\ell=c_0V_{\max,\ell}\), \(c_0=1\), and \(R_{\max}=1\). We set
\(\xi^{(0)}=2\) and \(\beta^{(0)}=2\), which gives
\(\xi^{(0:3)}=(2,12,52,212)\) and \(\alpha^{(1:3)}=(3,13,53)\); the
leaf-to-root recursion checks are verified before reporting the certificate.
For confidence \(1-\delta\), we use \(\delta_1=0.75\delta\) and
\(\delta_2=0.25\delta\).
The partition term uses \(C=20\),
\(k_Z=1\), and radius cap \(1\). Results are averaged over 100 independent
runs with seeds $20260504-20260603$.

The experiment is implemented in Julia and running on a machine with an i7-12700 CPU and 32GB RAM. 

\section{Proof of Theorem~\ref{theorem:pomdp-mcts-ucb}}
\label{appendix:proof-pomdp-mcts-ucb}
The parameters are updated recursively as follows:
\begin{enumerate}
    \item \textbf{Leaf Initialization:} Let $\xi^{(L)}$ be a sufficiently large constant dependent on the bounded reward support at the leaves.
    \item \textbf{Recursive Step:} For each depth $h = L-1, \dots, 0$, choose the exploration parameter $\alpha^{(h+1)}$ satisfying:
    \begin{align}
        \xi^{(h+1)} \eta (1-\eta) \le \alpha^{(h+1)} < \xi^{(h+1)} (1-\eta),
    \end{align}
    and define the resulting concentration exponent for depth $h$ as:
    \begin{align}
        \xi^{(h)} = \alpha^{(h+1)} - 1.
    \end{align}
    \item \textbf{Condition:} The leaf parameter $\xi^{(L)}$ is chosen large enough such that $\alpha^{(1)} > 2$.
\end{enumerate}

The proof of Theorem~\ref{theorem:pomdp-mcts-ucb} largely follows the structure of \citet{Shah22or} but extending to the discrete POMDP setting. The main difference lies in the \emph{observation mixing} step that handles the branching due to the observation and transition model in POMDPs. We provide a proof sketch below.

\begin{proof}[Proof Sketch of Theorem \ref{theorem:pomdp-mcts-ucb}]
The proof proceeds by backward induction on the depth of the tree, from the leaf level ($h=L$) to the root ($h=0$). We define a random return process $\{G_t\}$ to satisfy the polynomial concentration property $\mathcal{P}(\beta, \xi, \eta)$ (same as~\citep{Shah22or}) around a mean $V^*$ if for all $n \ge 1, z \ge 1$:
\begin{align}
    \mathbb{P}\left( n(\hat{G}_n - V^*) \ge n^{\eta} z \right) \le \beta z^{-\xi},
\end{align}
and similarly for the lower tail.

\paragraph{1. Base Case (Depth $h=L$)}
At the leaf nodes, the value estimate is derived from the immediate bounded reward. It satisfies Hoeffding's inequality, which implies polynomial concentration $\mathcal{P}(\beta^{(L)}, \xi^{(L)}, \eta)$ for any $\eta \in [1/2, 1)$ and sufficiently large $\xi^{(L)}$.

\paragraph{2. Inductive Step (Depth $h+1 \to h$)}
Assume that for all belief nodes at depth $h+1$, the MCTS value estimates satisfy $\mathcal{P}(\beta^{(h+1)}, \xi^{(h+1)}, \eta)$. We show that the value estimate at a parent node $b$ at depth $h$ satisfies $\mathcal{P}(\beta^{(h)}, \xi^{(h)}, \eta')$. This step decomposes into two parts: Observation Mixing and Action Selection.

\paragraph{Part A: Observation Mixing}
Consider a fixed action $a$ at belief $b$. Executing $a$ yields a stochastic observation $z \in \mathcal{Z}$ with probability $p_z \ge \phi_Z$ (Assumption~4.2). The empirical return $\hat{Q}_{a,n}$ after $n$ visits to $(b,a)$ is a weighted sum of the children's value estimates. Let $N_z(n)$ be the visit count of observation $z$. We decompose the error:
\begin{align}
    n(\hat{Q}_{a,n} - Q_a^*) = \underbrace{\sum_{z \in \mathcal{Z}} N_z(n) (\hat{V}_{z, N_z(n)} - V_z^*)}_{\text{Recursive Estimation Error}} + \underbrace{\sum_{z \in \mathcal{Z}} (N_z(n) - n p_z) V_z^*}_{\text{Stochastic Observation Noise}}.
\end{align}
\begin{itemize}
    \item The \textit{Recursive Estimation Error} is a sum of terms that, conditioned on $N_z(n)$, satisfy the inductive hypothesis $\mathcal{P}(\beta^{(h+1)}, \xi^{(h+1)}, \eta^{(h+1)})$. Since $\eta^{(h+1)} \ge 1/2$, the polynomial concentration is preserved under this weighted summation (see Lemma A.1 in \citep{Shah22or}).
    \item The \textit{Stochastic Observation Noise} arises from the deviation of counts $N_z(n)$ from their expected values $n p_z$. These counts follow a multinomial distribution and thus exhibit Sub-Gaussian concentration. Since Sub-Gaussian tails decay faster than polynomial tails, this noise is absorbed into the polynomial bound without degrading the exponent $\xi^{(h+1)}$.
\end{itemize}
Thus, the mixed return process for action $a$ satisfies $\mathcal{P}({\beta}^{(h+1)}, \xi^{(h+1)}, \eta^{(h+1)})$.

\paragraph{Part B: Action Selection (Bandit)}
We treat the node $b$ as a non-stationary bandit problem where each arm $a$ satisfies $\mathcal{P}({\beta}^{(h+1)}, \xi^{(h+1)}, \eta^{(h+1)})$. We apply the Corrected-UCB rule with parameter $\alpha^{(h+1)}$.
By Theorem 3 of \citet{Shah22or}, provided that $\xi^{(h+1)}\eta^{(h+1)}(1-\eta^{(h+1)}) \le \alpha^{(h+1)} < \xi^{(h+1)}(1-\eta^{(h+1)})$, the aggregated value estimate at node $b$ satisfies the polynomial concentration $\mathcal{P}(\beta^{(h)}, \xi^{(h)}, \eta^{(h)})$ with:
\begin{align}
    \xi^{(h)} = \alpha^{(h+1)} - 1, \quad \text{and} \quad \eta^{(h)} = \frac{\alpha^{(h+1)}}{\xi^{(h+1)}(1-\eta^{(h+1)})}.
\end{align}
This confirms the recursive parameter updates defined in the theorem.

\paragraph{3. Conclusion}
By backward induction, the root node $b_0$ satisfies $\mathcal{P}(\beta^{(0)}, \xi^{(0)}, \eta^{(0)})$. Specifically,
\begin{align}
    \mathbb{P}\left( n|\hat{V}_n(b_0) - V^*(b_0)| \ge n^{\eta^{(0)}} z \right) \le \beta^{(0)} z^{-\xi^{(0)}},
\end{align}
which implies $|\hat{V}_n(b_0) - V^*(b_0)| = O_p(n^{-(1-\eta^{(0)})})$.
\end{proof}

\section{Proof of Theorem 2}
\label{app:proof_theorem_2}


\begin{proof}
Fix a planning horizon $L$. Let $\tilde{h}a$ be a possible history-action node. Let $m(\tilde{h}a) = |\mathcal{C}(\tilde{h}a)|$ be the number of cells at that node. We denote the set of cell centers as $\{z^c_1, \dots, z^c_{m}\}$. For any observation $z \in \mathcal{Z}$, let $q_{\tilde{h}a}(z) = \arg\min_{z^c \in \mathcal{C}(\tilde{h}a)} D(z, z^c)$ be the nearest-center quantizer.
By the covering radius definition $\tilde{\epsilon}_{\tilde{h}a}$ in Assumption~\ref{ass:covering_dvp}, we have $D(z, q_{\tilde{h}a}(z)) \le \tilde{\epsilon}_{\tilde{h}a}$ for all $z$.

\paragraph{Step 1: One-Step Error ($Q^m$).}
To analyze the error propagation, we introduce an intermediate Q-function, $Q^m(\tilde{h}, a)$, which represents the value of taking action $a$ at history $\tilde{h}$ using the \textit{quantized} observation for the immediate step, but assuming the \textit{optimal} value function $V^*$ for all future steps.

Let the true Q-value be:
\begin{align}
    Q^*(\tilde{h}, a) = r(b, a) + \gamma \mathbb{E}_{z \sim \mathbb{P}_Z} [V^*(\psi(b, a, z))].
\end{align}
We define the modified Q-value $Q^m$ as:
\begin{align}
    Q^m(\tilde{h}, a) \triangleq r(b, a) + \gamma \mathbb{E}_{z \sim \mathbb{P}_Z} [V^*(\psi(b, a, q_{\tilde{h}a}(z)))].
\end{align}
Note that $Q^m$ uses $V^*$ in the expectation isolating the local error caused by center represented belief update.

We bound the difference $|Q^*(\tilde{h}, a) - Q^m(\tilde{h}, a)|$.
By Assumption~\ref{ass:holder-obs} and ~\ref{ass:holder-V-function}, we have:
\begin{align}
    |V^*(\psi(b, a, z)) - V^*(\psi(b, a, q_{\tilde{h}a}(z)))| \le L_V L_\psi^\beta D(z, q_{\tilde{h}a}(z))^{\alpha\beta} \le L_V L_\psi^\beta (\tilde{\epsilon}_{\tilde{h}a})^{\alpha\beta}.
\end{align}
Integrating over $z$:
\begin{align}
    |Q^*(\tilde{h}, a) - Q^m(\tilde{h}, a)|
    &\le \gamma \int_{z \in \mathcal{Z}} \mathbb{P}_Z(z|\tilde{h},a) |V^*(\psi(b, a, z)) - V^*(\psi(b, a, q_{\tilde{h}a}(z)))| dz \\
    &\le \gamma L_V L_\psi^\beta (\tilde{\epsilon}_{\tilde{h}a})^{\alpha\beta}. \label{eq:local_q_error}
\end{align}

\paragraph{Step 2: Recursive Error.}
We now relate the optimal value function $V^*$ to the partitioned optimal value function $V^{p*}$.
Recall that $V^*(\tilde{h}) = \max_a Q^*(\tilde{h}, a)$ and $V^{p*}(\tilde{h}) = \max_a Q^{p*}(\tilde{h}, a)$.
Using the inequality $|\max f - \max g| \le \max |f - g|$:
\begin{align}
    |V^*(\tilde{h}) - V^{p*}(\tilde{h})| \le \max_{a} |Q^*(\tilde{h}, a) - Q^{p*}(\tilde{h}, a)|.
\end{align}
We apply the triangle inequality using our intermediate term $Q^m$:
\begin{align}
    |Q^*(\tilde{h}, a) - Q^{p*}(\tilde{h}, a)| \le \underbrace{|Q^*(\tilde{h}, a) - Q^m(\tilde{h}, a)|}_{\text{Local Quantization Error}} + \underbrace{|Q^m(\tilde{h}, a) - Q^{p*}(\tilde{h}, a)|}_{\text{Iterative Propagation Error}}.
\end{align}
The first term is bounded by Eq. (\ref{eq:local_q_error}). The second term captures the difference in future values:
\begin{align}
    |Q^m(\tilde{h}, a) - Q^{p*}(\tilde{h}, a)|
    &= \left| \gamma \mathbb{E}_{z} [V^*(\psi(b, a, q(z)))] - \gamma \mathbb{E}_{z} [V^{p*}(\psi(b, a, q(z)))] \right| \\
    &\le \gamma \mathbb{E}_{z} \left| V^*(\psi(\cdot)) - V^{p*}(\psi(\cdot)) \right| 
    \le \gamma \max_{b'} |V^*(b') - V^{p*}(b')|.
\end{align}
Thus, the total error at depth $t$ satisfies the recurrence:
\begin{align}
    \epsilon_t \le \gamma L_V L_\psi^\beta \max (\tilde{\epsilon})^{\alpha\beta} + \gamma \epsilon_{t+1}.
\end{align}

\paragraph{Step 3: Finite-Horizon Accumulation.}
Iterate recursively over the horizon $L$, we can get:
\begin{align}
    |V^*(b_0) - V^{p*}(b_0)| \le \sum_{t=0}^{L-1} \gamma^{t+1} L_V L_\psi^\beta \max_{\tilde{h}a \in \mathcal{H}_L} (\tilde{\epsilon}_{\tilde{h}a})^{\alpha\beta} = \frac{\gamma(1-\gamma^L)}{1-\gamma} L_V L_\psi^\beta \left( \max_{\tilde{h}a} \tilde{\epsilon}_{\tilde{h}a} \right)^{\alpha\beta}.
\end{align}

\paragraph{Step 4: Probabilistic Bound.}
Finally, we bound $\tilde{\epsilon}_{\tilde{h}a}$. By Assumptions 5.3 and 5.4, the covering radius with $m$ samples satisfies $\mathbb{P}(\tilde{\epsilon}_{\tilde{h}a} > 2r) \le C_{cov} r^{-k_Z} \exp(-m c_\nu r^{k_Z})$.
Solving for $r$ such that the probability is bounded by $\delta/|\mathcal{H}_L|$ (union bound over the tree) yields the high-probability bound:
\begin{align}
    \tilde{\epsilon}_{\tilde{h}a} \le C \left( \frac{\log(m(\tilde{h}a) |\mathcal{H}_L| / \delta)}{m(\tilde{h}a)} \right)^{1/k_Z}.
\end{align}
Substituting this into the accumulated error completes the proof.
\end{proof}

\section{Proof of Theorem~\ref{theorem:overall-error}}
\label{proof:overall-error}
\begin{proof}[Sketch of Proof for Theorem 3]
The proof relies on decomposing the total error into two distinct components: the \textit{approximation error} (induced by discretizing the continuous space) and the \textit{estimation error} (induced by the finite sampling in MCTS).

Let $V^*(b_0)$ denote the optimal value of the original continuous POMDP, and $\hat{V}_n^p(b_0)$ denote the MCTS value estimate at the root after $n$ simulations using the adaptive partitioning. We introduce an intermediate term $V^{p*}_n(b_0)$, which represents the optimal value of the \textit{specific} discrete POMDP instance induced by the Voronoi partition $\mathcal{P}_n$ existing at iteration $n$.

Using the triangle inequality, we decompose the total error as:
\begin{align}
    |V^*(b_0) - \hat{V}_n^p(b_0)| \le \underbrace{|V^*(b_0) - V^{p*}_n(b_0)|}_{\text{Approximation Error}} + \underbrace{|V^{p*}_n(b_0) - \hat{V}_n^p(b_0)|}_{\text{Estimation Error}}
\end{align}

\paragraph{Bounding the Approximation Error (Term 1):}
This term can be bounded directly from Theorem~\ref{theorem:partition-loss}.

\paragraph{Bounding the Estimation Error (Term 2).}
Here, we use the Theorem~\ref{theorem:pomdp-mcts-ucb}'s result to bound the term. However, a minor modification will be made. For the Proof Part A of theorem~\ref{theorem:pomdp-mcts-ucb}, we need to make a modification, the stochastic observation noise is now non-stationary.Modification is provided at the end of this section.

\paragraph{Union Bound:}
Theorem 3 is obtained by taking a union bound over the events where both the partitioning requirements (Theorem 2) and the MCTS concentration (Theorem 1 with modification) hold. By setting error probabilities $\delta_1 = \delta_2 = \delta/2$, we obtain the final finite-time bound for POMDPs with the continuous observation space.

\subsection*{Modification to Proof Part A's Observation Noise of Theorem~\ref{theorem:pomdp-mcts-ucb}}
This is to make modification to the proof of Theorem~\ref{theorem:pomdp-mcts-ucb} to handle the non-stationary observation noise. If the partitioning is adaptive or in a progressive widening way, we need the following modification to the proof of Theorem~\ref{theorem:pomdp-mcts-ucb} with a martingale-based analysis to handle the non-stationarity of the discretized observation process. The key idea is to show that the observation-mixing noise term remains polynomially concentrated even when the discretized observations are non-stationary, which allows us to maintain the same concentration properties required for the UCB analysis.
 We only need to modify the Observation Noise bound in the proof  Part A of Theorem~\ref{theorem:pomdp-mcts-ucb}.

Fix a history--action node $\tilde h a$ in the tree induced by the current partition.
Index by $t=1,2,\dots$ the successive visits to $\tilde h a$ during the simulations, and let $\mathcal F_{t-1}$ be the $\sigma$-field generated by all randomness up to just before the $t$-th such visit.
Due to the adaptive (time-varying) quantizer/partition used to discretize observations, the discrete child symbol $\tilde z_t\in\mathcal Z_n(\tilde h a)$ may be \emph{non-stationary}.
Define the conditional (time-varying) categorical probabilities
\begin{align}
p_{u,t}\triangleq \mathbb P(\tilde z_t=u\mid \mathcal F_{t-1}),\qquad u\in\mathcal Z_n(\tilde h a).
\end{align}
Let $N_u(m)\triangleq \sum_{t=1}^m \mathbf 1\{\tilde z_t=u\}$ be the number of times child $u$ is selected among the first $m$ visits. Then
\begin{align}
N_u(m)-\sum_{t=1}^m p_{u,t}
=\sum_{t=1}^m\Bigl(\mathbf 1\{\tilde z_t=u\}-p_{u,t}\Bigr)
=:M_u(m),
\end{align}
where $(M_u(m))_{m\ge 1}$ is a martingale with respect to $(\mathcal F_m)$ and has bounded increments $|M_u(t)-M_u(t-1)|\le 1$.

Consider the observation-mixing noise term
\begin{align}
S(m)\triangleq \sum_{u\in\mathcal Z_n(\tilde h a)} V_u^\ast\Bigl(N_u(m)-\sum_{t=1}^m p_{u,t}\Bigr)
=\sum_{t=1}^m\Bigl(V^\ast_{\tilde z_t}-\mathbb E[V^\ast_{\tilde z_t}\mid \mathcal F_{t-1}]\Bigr),
\end{align}
where $V_u^\ast$ denotes the optimal value at the child corresponding to $u$.
Hence $(S(m))_{m\ge 1}$ is also a martingale. Since rewards are bounded and the horizon is finite, there exists $V_{\max}<\infty$ such that $|V_u^\ast|\le V_{\max}$ for all $u$, and thus
\begin{align}
|S(t)-S(t-1)|
=\bigl|V^\ast_{\tilde z_t}-\mathbb E[V^\ast_{\tilde z_t}\mid \mathcal F_{t-1}]\bigr|
\le 2V_{\max}.
\end{align}
By Azuma--Hoeffding, for any $x>0$,
\begin{align}
\mathbb P\bigl(|S(m)|\ge x\bigr)\le 2\exp\!\left(-\frac{x^2}{8mV_{\max}^2}\right).
\end{align}
This bound does not require stationarity of $(\tilde z_t)$. Moreover, exponential tails imply the polynomial tail used in Theorem~\ref{theorem:pomdp-mcts-ucb}: for any $\xi>0$, let
\begin{align}
\beta_\xi \triangleq \sup_{z\ge 1} 2\,z^\xi \exp\!\left(-\frac{z^2}{8V_{\max}^2}\right)<\infty.
\end{align}
Then for any $\eta\ge \tfrac12$ and all $m\ge 1$, $z\ge 1$,
\begin{align}
\mathbb P\bigl(|S(m)|\ge m^\eta z\bigr)
\le 2\exp\!\left(-\frac{m^{2\eta-1}z^2}{8V_{\max}^2}\right)
\le 2\exp\!\left(-\frac{z^2}{8V_{\max}^2}\right)
\le \beta_\xi\, z^{-\xi}.
\end{align}
Therefore, the observation-mixing noise remains polynomially concentrated (with the same exponent $\eta$) even under non-stationary discretized observations, and the rest of Proof Part A (and hence Theorem~\ref{theorem:pomdp-mcts-ucb}) proceeds unchanged.

\end{proof}

\end{document}